\relax
\documentclass[letterpaper]{article} % DO NOT CHANGE THIS
\usepackage{aaai20}  % DO NOT CHANGE THIS
\usepackage{times}  % DO NOT CHANGE THIS
\usepackage{helvet} % DO NOT CHANGE THIS
\usepackage{courier}  % DO NOT CHANGE THIS
\usepackage[hyphens]{url}  % DO NOT CHANGE THIS
\usepackage{graphicx} % DO NOT CHANGE THIS
\urlstyle{rm} % DO NOT CHANGE THIS
  % DO NOT CHANGE THIS
\usepackage{graphicx}  % DO NOT CHANGE THIS
\frenchspacing  % DO NOT CHANGE THIS
\setlength{\pdfpagewidth}{8.5in}  % DO NOT CHANGE THIS
\setlength{\pdfpageheight}{11in}  % DO NOT CHANGE THIS
%%%%%%%%%%%%%%%%%%%%%%%%%%%%%%%%%%%%%%%%%%%%%%%%%%%%%%%%
\usepackage{amsmath,amssymb,amsfonts}
\usepackage{lipsum}
\usepackage{booktabs}
\usepackage{algorithm}
\usepackage[noend]{algpseudocode}
\usepackage{multirow}
\usepackage{paralist}
\usepackage{xspace}
\usepackage{textcomp}
\usepackage{fancyhdr}

\usepackage{subfig}

%%%%%%%%%%%%%%%%%%%%%%%%%%%%%%%%%%%%%%%%%%%%%%%%%%%%%%%%%%%%%%%%%%
%% Custom Commands
\newcommand{\G}{\mathcal{G}}
\newcommand{\strips}{\textsc{strips}\xspace}

% RCR Additions
\usepackage{todonotes} % for comments

%% Section, Equation, Figure, and Table reference shorthands.
%\newcommand{\sref}[1]{Section~\ref{#1}}
%\newcommand{\eref}[1]{Equation~\ref{#1}}
%\newcommand{\fref}[1]{Figure~\ref{#1}}

%% Natbib-styled citation commands.
\newcommand{\citet}[1]{\citeauthor{#1}~\shortcite{#1}}
\newcommand{\citep}{\cite}

\usepackage{amsthm}
\theoremstyle{definition}
\newtheorem{definition}{Definition}
\newtheorem{theorem}{Theorem}
\newtheorem{lemma}{Lemma}[theorem]
\newenvironment{proof_sketch}{\proof}{\endproof}
\usepackage{tabularx}

\usepackage{enumitem}

%%%%%%%%%%%%%%%%%%%%%%%%%%%%%%%%%%%%%%%%%%%%%%%%%%%%%%%%%
%PDF Info Is REQUIRED.
% For /Author, add all authors within the parentheses, separated by commas. No accents or commands.
% For /Title, add Title in Mixed Case. No accents or commands. Retain the parentheses.
 \pdfinfo{
/Title ()
/Author ()
} %Leave this	

\setcounter{secnumdepth}{2} %May be changed to 1 or 2 if section numbers are desired.

% The file aaai19.sty is the style file for AAAI Press 
% proceedings, working notes, and technical reports.
%
\setlength\titlebox{2.5in} % If your paper contains an overfull \vbox too high warning at the beginning of the document, use this
% command to correct it. You may not alter the value below 2.5 in
%

\title{Partial-Order, Partially-Seen Observations of Fluents or Actions \\ for Plan Recognition as Planning}
%Your title must be in mixed case, not sentence case. 
% That means all verbs (including short verbs like be, is, using,and go), 
% nouns, adverbs, adjectives should be capitalized, including both words in hyphenated terms, while
% articles, conjunctions, and prepositions are lower case unless they
% directly follow a colon or long dash

%% Double blind. Leave blank for submission.
\author{%
	Jennifer M. Nelson\textsuperscript{\rm 1} \and Rogelio E. Cardona-Rivera\textsuperscript{\rm 1, 2}\\ 
	\textsuperscript{\rm 1}School of Computing, \textsuperscript{\rm 2}Entertainment Arts and Engineering Program\\ 
	% If you have multiple authors and multiple affiliations use superscripts in text and roman font to identify them. For example, Sunil Issar,\textsuperscript{\rm 2} J. Scott Penberthy\textsuperscript{\rm 3} George Ferguson,\textsuperscript{\rm 4} Hans Guesgen\textsuperscript{\rm 5}. Note that the comma should be placed BEFORE the superscript for optimum readability
University of Utah\\
Salt Lake City, UT 84112 USA\\
jennifer.m.nelson@utah.edu, rogelio@cs.utah.edu % email address must be in roman text type, not monospace or sans serif
}
\begin{document}

\maketitle

%%%%%%%%%%%%%%%%%%%%%%%%%%%%%%%%%%%%%%%%%%%%%%%%%%%%%%%%%%%%%%%%%
\begin{abstract}
%%%%%%%%% NO TOUCHY %%%%%%%%%%%%
This work aims to make plan recognition as planning more ready for real-world scenarios by adapting previous compilations to work with partial-order, half-seen observations of both fluents and actions. We first redefine what observations can be and what it means to satisfy each kind. We then provide a compilation from plan recognition problem to classical planning problem, similar to original work by \citeauthor{ramirezGeffner09}, but accommodating these more complex observation types. This compilation can be adapted towards other planning-based plan recognition techniques. Lastly we evaluate this method against an ``ignore complexity" strategy that uses the original method by \citeauthor{ramirezGeffner09}.
Our experimental results suggest that, while slower, our method is equally or more accurate than baseline methods; our technique sometimes significantly reduces the size of the solution to the plan recognition problem, \textit{i.e,} the size of the optimal goal set. We discuss these findings in the context of plan recognition problem difficulty and present an avenue for future work.

\end{abstract}
% 

%%%%%%%%%%%%%%%%%%%%%%%%%%%%%%%%%%%%%%%%%%%%%%%%%%%%%%%%%%%%%%%%%
\section{Introduction}

\noindent Plan recognition is the problem of identifying the plans and goals of an agent, given some observations of their behavior(s)~\cite{sukthankar2014plan}. Plan recognition has applications wherever it's useful for a system to anticipate an agent's actions or desires. This variety of applications includes robot-human coordination~\cite{talamadupula2014coordination}, human-computer collaboration~\cite{lesh1999using}, assisted cognition~\cite{pentney2006sensor}, network monitoring~\cite{sohrabi2013hypothesis}, interactive narratives~\cite{cardona2015symbolic}, and language recognition~\cite{carberry1990plan,zukerman2001natural}. 

\citet{ramirezGeffner09} realized that plan recognition problems were very similar to classical planning problems, and created a formulation to compile recognition problems into planning problems ready for off-the-shelf planning algorithms. Previously, plan recognition relied on specialized algorithms and handcrafted libraries. Rather than rely on a library of possible plan-goal pairs, \citeauthor{ramirezGeffner09}'s formulation relies on a set of possible goals and a \textit{domain theory} describing possible actions. It assumes that any plan which reaches a possible goal at optimal cost but also ``explains'' all observations (in order) is part of the optimal solution set to a recognition problem.

In addition to defining an optimal solution set, \citet{ramirezGeffner09} also relaxed its own optimality assumption to allow suboptimal \textit{approximate} solutions computed with faster algorithms. This also allowed solutions to ``skip'' some observations if necessary. \citet{ramirez2010probabilistic} also relaxed the optimality assumption, such that goals whose optimal plans differed significantly from the observations were considered less likely. \citet{sohrabi2016plan} further relaxed the optimality assumption, admitting that observation sequences may be non-optimal, noisy, or missing segments. It assumed observations of single fluents.

The methods above all assume total-ordered fully specified observations, though real-world applications may be more complex. One might observe artifacts of past actions, but not know the order in which those artifacts appeared. One might see the actor pick something up, but not know if it were the key or the coin (a half-specified observation). One might later observe that the key is missing from that spot (a fluent observation). This is important information if the agent's goal is behind a locked door, but current methods cannot use it. Our work provides methods to utilize this information. In this paper we modify the original \cite{ramirezGeffner09} compilation, but our definitions for observations and our use of ordering fluents can be adapted for any of the methods mentioned above. We focus only on the ``optimal" set of answers for complex observations, leaving relaxations to future work.

%%%%%%%%%%%%%%%%%%%%%%%%%%%%%%%%%%%%%%%%%%%%%%%%%%%%
\section{Motivating Example}
\begin{figure}
    \centering
    \includegraphics[width=\linewidth]{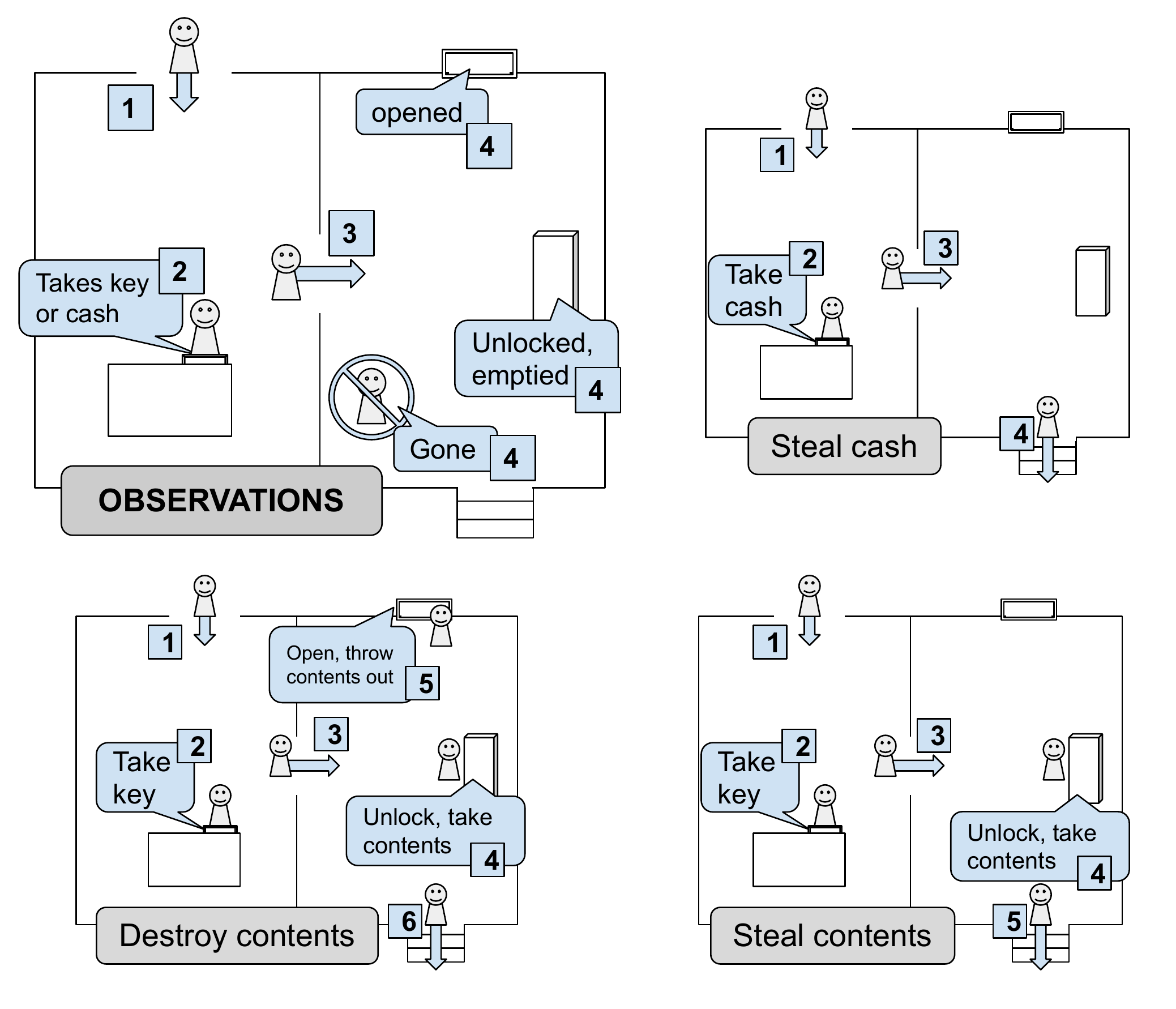}
    \caption{DetectiveBot's observations, and unconstrained plans for the possible motivations }
    \label{fig:MotivatingExample}
\end{figure}

We illustrate our method with the following scenario: DetectiveBot is trying to solve a breaking-and-entering case at a museum. Cameras record the culprit breaking into the museum office, rifling through the manager's top drawer, pocketing something, then sprinting into an unfilmed backroom. DetectiveBot inspects the backroom: it contains a single window (opened), a chest (unlocked and emptied), and a stairwell towards an exit. DetectiveBot wants to figure out the culprit's motives. Were they stealing cash from the managers drawer? Were they stealing the contents of the chest? Or were they destroying the contents of the chest?

DetectiveBot models this situation. It knows the culprit took either cash or a key to the chest from the drawer, then entered the back room. DetectiveBot does not know what order things happened in the backroom, but it knows that by the end the window was opened, the chest was unlocked and emptied, and the culprit was gone. First DetectiveBot computes plans for what the culprit would've done for each of their three possible motives, unconstrained by DetectiveBot's observations. (Figure \ref{fig:MotivatingExample}) Then it computes plans for each of the three possible motives, such that each plan also ``sees" the observations. DetectiveBot compares the unconstrained plans to their constrained counterparts, and discovers that only one pair has identical costs: destroying the contents.

%%%%%%%%%%%%%%%%%%%%%%%%%%%%%%%%%%%%%%%%%%%%%%%%%%%%%%%%%%%%%%%%%
\section{The Goal Recognition Problem with Complex Observation Constraints}

%****************************************************************
\paragraph{Planning Background} In this paper, we rely on the formulation of plan recognition as \textbf{classical planning}. Classical planning is a model of problem solving, wherein agent actions are fully observable and deterministic. Classical problems are typically represented in the \strips formalism~\cite{fikes1971strips}; a \strips planning problem is a tuple $P = \left\langle F, I, A, G, f_{\textnormal{cost}} \right\rangle$ where $F$ is the set of fluents, $I \subseteq F$ is the initial state, $G \subseteq F$ is the set of goal conditions, and $A$ is a set of actions. Each action is a triple $a = \left\langle \textsc{pre}(a), \textsc{add}(a), \textsc{del}(a) \right\rangle$, that represents the precondition, add, and delete lists respectively, all subsets of $F$. A state is a set of conjuncted fluents, and an action $a$ is applicable in a state $s$ if $\textsc{pre}(a) \subseteq s$; applying said applicable action in the state results in a new state $s' = (s\textnormal{\textbackslash} \textsc{del}(a)) \cup \textsc{add}(a)$ and incurs a non-negative cost determined by the function $f_{\textnormal{cost}}: A \xrightarrow{} R^{0+}$.

The solution to a planning problem $P$ is a plan $\pi = [a_1,...,a_m]$, a sequence of actions $a_i \in A$ that transforms the problem's initial state $I$ to a state $s_m$ that satisfies the goal; \textit{i.e.} $G \subseteq s_m$. The cost c$(\pi)$ of a plan $\pi$ is $\Sigma f_{\textnormal{cost}}(a_i)$ for all actions $a_i \in \pi$. A plan segment is a segment of a plan, denoted $\pi^{k}_{j} = [a_j,...,a_k] \ (a_i \in A, 1 \leq j \leq k\leq m)$. 

The execution trace ${trace(\pi,I)=[I,a_1,s_1,...,a_m,s_m]}$ of plan $\pi$ from initial state $I$ is defined as the alternating sequence of states and actions, starting with $I$, such that $s_i$ results from applying $a_i$ to state $s_{i-1}$.

%****************************************************************
\paragraph{Handling Complex Observations} Our formulation %bases itself off
is based on the formulation %of
by \citeauthor{ramirezGeffner09}, but we relax the assumption that all observations are totally ordered and grounded actions. Instead, we
allow observations to be either an observed action or a set of observed fluents. Further, we allow partial orderings in the observations as well as partially-specified observations via sets of possible observations. 

Fundamental to this formulation are \textbf{observation groups}, which impose constraints on the observations they contain. We describe two types of observations, three types of groups, and what it means for a plan to \textbf{satisfy} each. Because groups can nest within each other, we describe satisfaction of an outer group
in terms of the satisfaction of its nested member(s) by a plan \textit{through} a plan segment. Because a member might be a simple observation, we also describe the satisfaction of a simple observations in terms of a plan \textit{through} a plan segment.

\begin{definition}
    An \textbf{action observation} $o$ paired with action $a \in A$ is satisfied by the plan $\pi$ through segment $\pi_j^k$ iff $a = a_i$ for some $a_i \in \pi_j^k$.
\end{definition}

A fluent observation is a set of fluents, and is satisfied by plan segments that mark out a time period where those fluents are true for some state. This definition relies on initial state $I$, which is later set in the definition of a Plan Recognition Problem.

\begin{definition}
    A \textbf{fluent observation} $o$ paired with fluents $F_o=\{f_1,...,f_m\} (f_i\in F)$ is satisfied by the plan $\pi$ with initial state $I$ through segment $\pi_j^k$ iff $F_o \subseteq s_i$ for some $s_i \ (j \leq i \leq k)$ in $trace(\pi,I)$.
\end{definition}

Note that the actions in the plan segment do not need to contribute to the observed fluents for this notion of satisfaction. The plan segment merely marks a time period in which the fluents were observed. It may be that $F_o$ was true since the initial state, but was not observable until much later. Our intent is to rule out goal-plan pairs where the plan never co-occurs with the fluents in $F_o$ being true. 

Now we define \textbf{ordered groups}, who impose ordering constraints on members. A member can be either another group or a single observation. An ordered group can only be satisfied by a plan segment if that segment can be split into chunks that satisfy each member \emph{in order}. (These chunks are the reason we define satisfaction in terms of plan segments.)

\begin{definition}
    An \textbf{ordered observation group} $\Theta_{<}=[\theta_1,...,\theta_n]$ is a totally ordered sequence of observation groups and/or single observations. A plan $\pi$ satisfies $\Theta_{<}$ through the plan segment $\pi_j^k$ iff there exists a monotonically increasing function of the form $f: [1,n+1] \rightarrow [j,k+1]$, $f(n+1) = k+1$, which maps members of $\Theta_{<}$ to segments of $\pi_j^k$ such that $\pi_{f(i)}^{f(i+1)-1}$=$[a_{f(i)},...,a_{f(i+1)-1}]$ satisfies $\theta_i$. 
\end{definition}

The function $f$ above is used to ensure ordering. It maps subsequent group members to subsequent plan segments. $f(i)$ marks the beginning of $\theta_i$'s plan segment. We allow no gaps in plan segments, so $\theta_i$'s segment ends right before $\theta_{i+1}$'s segment begins, and $\theta_n$'s segment ends where the whole plan segment ends.

Next we define \textbf{unordered groups}, who impose no constraints on members, but are only satisfied when all members are. When embedded in an ordered group, these form the \textit{partial} part of \textit{partial order}.

\begin{definition}
    An \textbf{unordered group} $\Theta_{\wedge}=\{\theta_1,...,\theta_n\}$ is a set of observation groups and/or single observations that have no ordering constraints with respect to each other. A plan $\pi$ satisfies $\Theta_{\wedge}$ through the segment $\pi_j^k$ iff $\pi_j^k$ satisfies all members.
\end{definition}

Lastly, we define \textbf{option groups}. Unlike the other groups, this group may contain only single observations, not nested groups, and is intended to describe a set of mutually exclusive \textit{possible} observations. This is how we support partially-seen observations: by transforming it into an option group of all its possible interpretations. This group is satisfied if at least one of its members is satisfied.

\begin{definition}
    An \textbf{option group} $\Theta_{\oplus}=|o_a,...,o_b|$ is a set of single observations where it is uncertain which of them was the true observation. A plan $\pi$ satisfies $\Theta_{\oplus}$ through the segment $\pi_j^k$ satisfies at least one member. A plan which satisfies more than one member is not considered more likely than a plan which satisfies only one member.
\end{definition}

With the above definitions, we mark out a modified version of the plan recognition problem. This is largely the same as previous work, but replaces a total-order constraint on observations with partial-order constraints and option groups.
\begin{definition}
    A \textbf{plan recognition problem over a domain theory} is the tuple $T=\langle P,\G,\Theta \rangle$ where $P = \langle F,I,A \rangle$ is a planning domain, $\G$ is the set of possible goals $G, G \subseteq F$, and $\Theta$ is an observation group as defined above. A solution to $T$ is one of the goals $G \in \G$ which has an optimal plan $\pi= [a_1,...,a_n] \ (a_i \in A)$ that also satisfies $\Theta$.
\end{definition}

\section{Compilation to Planning Problem}
We compile a goal recognition problem into a planning problem $P'[G']$ such that a solution to $P'[G']$ ``explains'' the observations nested in $\Theta$, while respecting $\Theta$'s ordering constraints and not double-explaining observations in an option group. If an optimal solution to $P'[G']$
has the same cost as an optimal solution to $P[G]$, $G$ and the plan solving $P'[G']$ are considered a solution to the plan recognition problem.

To ensure a solution to $P'[G']$ respects $\Theta$'s constraints, we use ordering fluents to ensure an explanation may only happen \textit{after} its predecessors, and that only one explanation is allowed per observation, or per option group. Let $nest(\theta)$ denote the set of all observations nested within $\theta$ or its subgroups.

\begin{definition}
    For the goal recognition problem \mbox{$T=\langle P,\G,\Theta \rangle$} where $P = \langle F,I,A \rangle$, the transformed planning problem for each $G \in \G$ is defined as \mbox{$P'[G'] = \langle F', I', A', G' \rangle$} such that:
    \begin{itemize}
        \item $F' = F \cup F_e$, where $F_e = \{p_{o_i} | \forall o_i \in nest(\Theta)\}$
        \item $I' = I$
        \item $A' = A \cup A_e$, where $A_e = \{e_{o_i} | \forall o_i \in nest(\Theta)\}$, and 
        \item $G' = G \cup F_e$.
    \end{itemize}
\end{definition}
\noindent We further define $A_{e}$ and $F_{e}$, and later show that a solution to $P'[G']$ satisfies $\Theta$.

\begin{definition}
    The explanation action $e_{o_i}$ for the fluent observation $o_i$ corresponding to fluents $F_{o_i} \subseteq F$ is a dummy action that marks that $F_{o_i}$ is observed, defined as:
    \begin{itemize}
        \item $\textsc{pre}(e_{o_i}) = F_{o_i} \cup \{\neg p_{o_i}\} \cup \{p_{o_{pre}} | o_{pre} \in B\}$ where $B$ is the set of all observations nested in any group immediately preceding a group that $o_i$ is nested within.
        \item $\textsc{add}(e_{o_i}) = \{p_{o_i}\}$
        \item $\textsc{del}(e_{o_i}) = \emptyset$
        \item $f_\textnormal{cost}(e_{o_i}) = 0$
        \item $p_{o_i} = p_{o_j}$ for all $o_j $ in the same option group as $o_i$ 
    \end{itemize}
\end{definition}
\noindent This definition is based off those of \citet{sohrabi2016plan}, except multiple fluents can be included in the same observation. A metric planner is needed to work with this zero-cost action, or the cost of these actions can be subtracted post-planning. 
\begin{definition}
    The explanation action $e_{o_i}$ for the action observation $o_i$ corresponding to action $a \in A$ is an action identical to $a$ but with additional ordering fluents:
    \begin{itemize}
        \item $\textsc{pre}(e_{o_i}) = \textsc{pre}(a) \cup \{\neg p_{o_i}\} \cup \{p_{o_{pre}} | o_{pre} \in B\}$ where $B$ is the set of all observations nested in any group immediately preceding a group that $o_i$ is nested within.
        \item $\textsc{add}(e_{o_i}) = \textsc{add}(a) \cup \{p_{o_i}\}$
        \item $\textsc{del}(e_{o_i}) = \textsc{del}(a)$
        \item $f_\textnormal{cost}(e_{o_i}) = f_\textnormal{cost}(a)$
        \item $p_{o_i} = p_{o_j}$ for all $o_j $ in the same option group as $o_i$ 
    \end{itemize}
\end{definition}
\noindent Note that explanation actions have the precondition $\neg p_{o_i}$, but add $p_{o_i}$ as an effect. As no action removes $p_{o_i}$, this means an explanation action cannot be used twice. Additionally, explaining an observation in an option group prevents all other explanations from that option group from being used. 

\begin{definition}\label{solution_def}
    The solution to $T = \langle P, \G, \Theta \rangle$ is the set \mbox{$\G^* = \{ G\in\G : \exists \pi$ satisfying $\Theta$ and optimally solving $P[G]$\} }
\end{definition}
\noindent In the next section, we prove that our compilation indicates members of $\G^*$: $G$ is in $\G^*$ when the optimal plan for $P'[G']$ costs the same as an optimal plan for $P[G]$. To find all members of $\G^*$, one would optimally solve $P'[G']$ and $P[G]$ for all $G$ in $\G$, and compare costs.

\section{Proofs}
In this section we present two main proofs. The first is a proof that our compilation indicates if a goal is in the solution to a goal recognition problem; \textit{i.e.} if the goal has an observation-satisfying plan that optimally reaches the goal. The second is a proof that solving the compiled problem will never yield an optimal goal set of size greater than the optimal goal set size achieved by solving the problem compiled as in \citeauthor{ramirezGeffner09} when ignoring complex observations. 

%================================================================
\subsection{Goal Recognition Problem is Solved}
We prove that our compilation produces a planning problem that solves the goal recognition problem in two steps. We first prove that any plan $\pi$ for the compiled problem $P'[G']$ has a corresponding plan $\psi(\pi)$ of equivalent cost that solves $P[G]$. We then prove that if $\pi$ is an optimal plan for $P'[G']$, then $\psi(\pi)$ satisfies $\Theta$ and (by first proof) solves $P[G]$ with the same cost as $\pi$. If this cost is the same as an optimal plan for just $P[G]$, then $G$ is a solution to $T$.
\begin{theorem}\label{corresponding_plan}
A plan $\pi$ for $P'[G']$ has a corresponding plan $\psi(\pi)$, solving $P[G]$, such that $c(\pi) = c(\psi(\pi))$. 
\end{theorem}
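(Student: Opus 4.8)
The plan is to define $\psi$ as a syntactic rewriting of $\pi$ that strips out the bookkeeping introduced by the compilation, and then show, by induction along the execution trace, that this rewriting preserves everything the original domain $P$ cares about. Concretely, $\psi(\pi)$ is obtained from $\pi$ by (i) deleting every explanation action $e_{o_i}$ that corresponds to a \emph{fluent} observation, and (ii) replacing every explanation action $e_{o_i}$ that corresponds to an \emph{action} observation on $a \in A$ by the action $a$ itself. Every remaining action of $\psi(\pi)$ lies in $A$, so $\psi(\pi)$ is a candidate plan for $P[G]$; it then remains to verify applicability, the goal condition, and the cost equality.

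The key observation is that the new fluents $F_e = \{p_{o_i}\}$ are \emph{inert} with respect to $P$: no action in $A$ mentions them in its precondition, add, or delete list, and $I' = I \subseteq F$ contains none of them. I would prove by induction on the length of the processed prefix of $\pi$ that, writing $trace(\pi, I') = [I', a_1', s_1', \dots]$ and $trace(\psi(\pi), I) = [I, b_1, t_1, \dots]$, there is a monotone index map $\sigma$ recording where each surviving action of $\pi$ lands in $\psi(\pi)$ such that $s_i' \cap F = t_{\sigma(i)}$, with $t_{\sigma(i)} = t_{\sigma(i-1)}$ whenever $a_i'$ is a deleted fluent-observation explanation action. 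The base case is immediate since $I' = I$. For the inductive step there are three cases. If $a_i' \in A$ it is carried over unchanged and its effect on $F$ is identical on both sides. If $a_i' = e_{o_i}$ for an action observation on $a$, then the $F$-literals of $\textsc{pre}(e_{o_i})$ are exactly $\textsc{pre}(a)$, $\textsc{add}(e_{o_i}) \cap F = \textsc{add}(a)$, and $\textsc{del}(e_{o_i}) = \textsc{del}(a)$, so replacing $e_{o_i}$ by $a$ reproduces the same $F$-state; applicability transfers because $\textsc{pre}(a) \subseteq s_{i-1}' \cap F = t_{\sigma(i-1)}$ by the induction hypothesis. If $a_i' = e_{o_i}$ for a fluent observation, then $\textsc{del}(e_{o_i}) = \emptyset$ and $\textsc{add}(e_{o_i}) = \{p_{o_i}\} \subseteq F_e$, so $s_i' \cap F = s_{i-1}' \cap F$ and dropping the action changes nothing about the $F$-state and leaves the applicability of later actions intact.

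With the trace correspondence in hand, the goal and cost claims are short. The final state of $\pi$ satisfies $G' = G \cup F_e$, so its projection onto $F$ — which by the induction equals the final state of $\psi(\pi)$ — contains $G$, hence $\psi(\pi)$ solves $P[G]$. For cost, $f_{\textnormal{cost}}(e_{o_i}) = f_{\textnormal{cost}}(a)$ for action-observation explanation actions and $f_{\textnormal{cost}}(e_{o_i}) = 0$ for fluent-observation explanation actions, while ordinary actions are untouched; summing over $\pi$ gives $c(\pi) = c(\psi(\pi))$.

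The step I expect to be the main obstacle is stating the trace induction cleanly: pinning down the index map $\sigma$ and the ``stutter'' on deleted actions, and being explicit that the negative precondition $\neg p_{o_i}$ and the identification $p_{o_i} = p_{o_j}$ within an option group never touch $F$ — they do not, since all such literals concern fluents in $F_e$, but it is worth saying so. Everything else is routine case analysis.
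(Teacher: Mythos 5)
Your proposal is correct and follows essentially the same route as the paper: the same definition of $\psi$ (drop fluent-observation explanations, replace action-observation explanations by their underlying actions), the same cost bookkeeping, and the same observation that the only divergence between traces lies in the inert ordering fluents $F_e$, which $G$ never mentions. The paper states this argument informally in a few sentences; your explicit trace induction with the index map $\sigma$ is simply a more rigorous rendering of the identical idea.
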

\begin{proof}
For $\pi$, let $\psi(\pi)$ be the same sequence of actions, but with fluent observation explanations removed, and action observation explanations replaced with their corresponding action in $A$. Because fluent explanations have no cost, and action explanations have cost identical to their corresponding action, $c(\pi)=c(\psi(\pi))$. Fluent explanations have no effect save for ordering fluents, and action explanations are identical to their corresponding actions, save for ordering fluents. Since $G$ does not include any ordering fluents, $\psi(\pi)$ still achieves $G$. 
\end{proof}

\begin{theorem}\label{satisfies_theta}
If plan segment $\pi_j^k = [a_j, ..., a_k] (a_i \in A')$ achieves all $p_{o_i}$ for $o_i \in nest(\Theta)$, then $\psi(\pi_j^k)$ satisfies $\Theta$.
\end{theorem}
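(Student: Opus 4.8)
The plan is to prove Theorem~\ref{satisfies_theta} by structural induction on the observation group $\Theta$, with one case for each kind of simple observation and one for each of the three group constructors. Everything rests on a single bookkeeping fact about the compilation: each ordering fluent $p_o$ is absent from $I'=I$, is never deleted, and is added only by the explanation action $e_o$ --- equivalently, within an option group, by any $e_{o'}$ with $o'$ a sibling of $o$, since option siblings share one fluent --- and, because $\neg p_o\in\textsc{pre}(e_o)$, is produced at most once in any executable plan. So ``$\pi_j^k$ achieves all $p_o$ for $o\in nest(\Theta)$'' says exactly that, for each ordering fluent tied to $nest(\Theta)$, $\pi_j^k$ contains precisely one explanation action producing it, and none of those fluents holds before $\pi_j^k$ starts. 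I would prove the strengthening: for a fixed plan $\pi$ for $P'[G']$ run from $I$, every subgroup or simple observation $\theta$ of $\Theta$, and every contiguous sub-segment $\sigma$ of $\pi$ satisfying that same condition relative to $nest(\theta)$, the plan $\psi(\pi)$ satisfies $\theta$ through the window $\psi(\sigma)$. Theorem~\ref{satisfies_theta} is the case $\theta=\Theta$ with $\sigma$ the whole plan (the condition holds since $G'\supseteq F_e$ and the $p_o$ are fresh).

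For the base cases: if $\theta$ is an action observation paired with $a$, then $\sigma$ contains $e_\theta$, which $\psi$ turns into $a$, so $\psi(\sigma)$ contains $a$ and hence $\psi(\pi)$ satisfies $\theta$ through $\psi(\sigma)$; if $\theta$ is a fluent observation with fluents $F_\theta$, then $\sigma$ contains $e_\theta$, and since $\textsc{pre}(e_\theta)\supseteq F_\theta$ and $\textsc{del}(e_\theta)=\emptyset$ the state right after $e_\theta$ still contains $F_\theta$, so by the reasoning already used for Theorem~\ref{corresponding_plan} --- $\psi$ alters no non-ordering fluent of any state and keeps the plan executable, as only explanation actions mention ordering fluents --- there is still a reachable state inside the window $\psi(\sigma)$ containing $F_\theta$. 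For $\theta=\Theta_\wedge=\{\theta_1,\dots,\theta_n\}$ we have $nest(\Theta_\wedge)=\bigcup_m nest(\theta_m)$, so $\sigma$ meets the condition for every $\theta_m$ and the induction hypothesis gives that $\psi(\pi)$ satisfies each $\theta_m$ through the same window $\psi(\sigma)$, which is exactly what satisfying an unordered group means. For $\theta=\Theta_\oplus=|o_a,\dots,o_b|$ the members share one ordering fluent, so exactly one of $e_{o_a},\dots,e_{o_b}$ lies in $\sigma$ (the rest blocked by the $\neg p$ precondition); applying the base case to that $e_{o_c}$ shows $\psi(\pi)$ satisfies $o_c$ through $\psi(\sigma)$, hence satisfies $\Theta_\oplus$.

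The ordered case $\theta=\Theta_<=[\theta_1,\dots,\theta_n]$ is the crux. First I would prove an ordering lemma: in any executable plan over $A'$ and for any ordered group $[\theta_1,\dots,\theta_n]$ occurring in $\Theta$, whenever $m<m'$ every explanation action for an observation in $nest(\theta_m)$ precedes every explanation action for an observation in $nest(\theta_{m'})$. This follows by induction on $m'-m$ from the precondition clause $\{p_{o_{\mathrm{pre}}}\mid o_{\mathrm{pre}}\in B\}$: for $o\in nest(\theta_{m'})$ the group $\theta_{m'}$ is one that $o$ is nested within and $\theta_{m'-1}$ immediately precedes it, so $e_o$ requires $p_{o'}$ for all $o'\in nest(\theta_{m'-1})$ and therefore comes after all of $nest(\theta_{m'-1})$'s explanations, and the inner induction links $nest(\theta_m)$ through $nest(\theta_{m'-1})$. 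Given the lemma, choose cut points $j=b_1\le b_2\le\dots\le b_n\le b_{n+1}=k+1$ with $b_m$ placed so that every explanation for $nest(\theta_{m-1})$ falls before $b_m$ and every explanation for $nest(\theta_m)$ falls at or after $b_m$ (possible by the lemma); then the chunk $\sigma_m=\pi_{b_m}^{\,b_{m+1}-1}$ contains precisely the explanation actions for $nest(\theta_m)$ and starts before any of them has fired, so the induction hypothesis applies and $\psi(\pi)$ satisfies $\theta_m$ through $\psi(\sigma_m)$. Since $\psi$ sends consecutive actions to consecutive actions, $\psi(\sigma_1),\dots,\psi(\sigma_n)$ are consecutive pieces partitioning $\psi(\sigma)$, so the map carrying $m$ to the start index of $\psi(\sigma_m)$ (and $n{+}1$ to one past the end of $\psi(\sigma)$) is monotonically increasing and is exactly the function witnessing that $\psi(\pi)$ satisfies $\Theta_<$ through $\psi(\sigma)$.

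The main obstacle is this ordered case: proving the ordering lemma transitively through nested groups from the single ``immediately preceding group'' clause, and checking that each carved chunk $\sigma_m$ is clean --- it contains all of $\theta_m$'s explanation actions and starts before any of them has been applied --- so that the induction hypothesis genuinely applies. The rest is routine: tracking the index shift caused by $\psi$, and noting that a fluent observation's witnessing state stays inside its chunk's window because $e_o$ deletes nothing. One caveat worth stating is that the argument (and arguably the compilation itself) tacitly assumes observation groups are non-empty, since an empty member $\theta_m$ would contribute no explanation action and thus break the ordering chain through it; such a member is vacuously satisfied and can simply be dropped.
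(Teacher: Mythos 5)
Your proposal is correct and follows essentially the same route as the paper: a case analysis over observation types (the paper phrases it as four mutually recursive lemmas rather than a single structural induction), with the ordered-group case handled exactly as in the paper by using the $\{p_{o_{pre}}\}$ preconditions to order explanation actions, cutting the segment at the first explanation of each member, and transferring the resulting monotone cut-point function through $\psi$. Your explicit ordering lemma, the ``chunk starts before any of its explanations has fired'' condition, and the empty-group caveat are slightly more careful renderings of steps the paper leaves implicit, not a different argument.
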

\begin{proof}
We prove this theorem through a series of Lemmas showing that such a plan segment will satisfy every observation and observation group; each Lemma corresponds to a different complex observation type.
%    
    %----------------------------------------------------------------
    % Individual observation Lemma
    \begin{lemma}[Individual Observations]
    \label{o_iLemma}
    If $\pi_j^k$ achieves $p_{o_i}$, 
    then $\psi(\pi_j^k)$ satisfies $o_i$.
    \end{lemma}

    \begin{proof}
    The only way for $\pi_j^k$ to achieve $p_{o_i}$ is through explanation action $e_{o_i}$. If $o_i$ is an observation of action $a$, then $e_{o_i}$ is translated to $a$ in $\psi(\pi_j^k)$, satisfying $o_i$. If $o_i$ is an observation of fluents $F_{o_i}$, then $e_{o_i}$ has $F_{o_i}$ as precondition, so $F_{o_i}$ must exist in the execution trace for $\pi_j^k$, and thus in the trace for $\psi(\pi_j^k)$. In either case, $\psi(\pi_j^k)$ satisfies $o_i$. 
    \end{proof}

    %----------------------------------------------------------------
    % Option group Lemma
    \begin{lemma}[Option Group]
    \label{optionGroupLemma}
    If $\pi_j^k$ achieves any $p_{o_i}$ for $o_i$ in the option group $\Theta_{\oplus}$, then $\psi(\pi_j^k)$ satisfies $\Theta_{\oplus}$.
    \end{lemma}

    \begin{proof}
    If $\pi_j^k$ achieves a particular $p_{o_i}$ for $o_i \in \Theta_{\oplus}$, then by Lemma \ref{o_iLemma}, $\psi(\pi_j^k)$ satisfies $o_i$. By satisfying a member of $\Theta_{\oplus}$, $\psi(\pi_j^k)$ satisfies $\Theta_{\oplus}$.
    \end{proof}

    %----------------------------------------------------------------
    % Unordered group Lemma
    \begin{lemma}[Unordered Group]
    \label{unorderedGroupLemma}
    If $\pi_j^k$ achieves all $p_{o_i}$ for $o_i \in nest(\Theta_{\wedge})$, then $\psi(\pi_j^k)$ satisfies $\Theta_{\wedge}$.
    \end{lemma}

    \begin{proof}
    $\psi(\pi_j^k)$ satisfies every simple observation contained directly in $\Theta_{\wedge}$, per Lemma \ref{o_iLemma}. $\psi(\pi_j^k)$ also satisfies any contained option groups, per Lemma \ref{optionGroupLemma}. If $\Theta_{\wedge}$ contains unordered groups, this is equivalent to containing the unordered group's members directly. Any contained ordered groups are also satisfied by $\psi(\pi_j^k)$, via Lemma \ref{orderedGroupLemma}. 
    \end{proof}

    %----------------------------------------------------------------
    % Ordered group Lemma
    \begin{lemma}[Ordered Group]
    \label{orderedGroupLemma}
    If $\pi_j^k$ achieves all $p_{o_i}$ for $o_i$ nested in the ordered group $\Theta_{<}=[\theta_1,...,\theta_n]$, $\psi(\pi_j^k)$ satisfies $\Theta_{<}$.
    \end{lemma}
    
    \begin{proof}
    % Let $f$ be a function of the form 
    % % \begin{equation*}
    %  ${f:[1,n+1]\rightarrow[j,k+1]}$,
    % \end{equation*}
    Let ${f:[1,n+1]\rightarrow[j,k+1]}$ be a function
    where $f(n+1)=k+1$ and $f(i)$ is the index of the \textit{first} explanation action for any $o\in nest(\theta_i)$. Segment 
    % \begin{equation*}
     $\pi_{f(i)}^{f(i+1)-1}$   
    % \end{equation*}
    then achieves all $p_o$ for $o\in nest(\theta_i)$, since the explanation action at $f(i+1)$ has \{$p_o$ for $o \in \theta_i$\} as a precondition. Via the other Lemmas, $\psi(\pi_{f(i)}^{f(i+1)-1})$ satisfies $\theta_i$.

    Let $f_\psi$ be a similar function of the form:
    \begin{equation*}
     f_\psi:[1,n+1]\rightarrow[1,|\psi(\pi_j^k)|+1]   
    \end{equation*}
    where $f_\psi(n+1) =|\psi(\pi_j^k)|+1$ and $f_\psi(i)$ maps to where the action at $f(i)$ \textit{would} be if the $\psi(\cdot)$ transformation did not remove/transform it. 
    This way, $f_\psi$ creates plan segments corresponding to the plan segments $f$ creates, such that
    
    \begin{equation*}
        \psi(\pi_{f(i)}^{f(i+1)-1}) = (\psi(\pi_j^k))_{f_\psi(i)}^{f_\psi(i+1)-1}        
    \end{equation*}

    Since (as mentioned above), the left-hand side of this equation satisfies $\theta_i$, so too does the right-hand side.
    This makes $f_\psi$ a non-monotonically increasing function which separates $\psi(\pi_j^k)$ into sections which satisfy each member of $\Theta_<$, and with it, $\psi(\pi_j^k)$ satisfies $\Theta_<$.
\end{proof}

Lemmas \ref{unorderedGroupLemma} and \ref{orderedGroupLemma} recurse into themselves if an unordered group contains an ordered group (or vice versa), but are satisfied by the base case where a group contains only simple observations or option groups. 
\\ 
With Lemmas \ref{o_iLemma} - \ref{orderedGroupLemma}, we prove a plan segment achieving all $p_{o_i}$ has a corresponding plan that satisfies $\Theta$.
\end{proof}
 An optimal solution to $P'[G']$ necessarily achieves all $p_{o_i}$, and so by theorem \ref{satisfies_theta}, has a corresponding plan that satisfies $\Theta$. With Lemma \ref{corresponding_plan}, we prove that this corresponding plan also solves $P[G]$. If the cost of this plan is the same as the cost for an optimal plan to just $P[G]$ (not constrained by $\Theta$), then \textbf{a plan exists that satisfies $\Theta$ \textit{and} optimally solves $P[G]$.} By definition \ref{solution_def}, $G$ is a solution to $T$.

%================================================================
\subsection{No Worse than Ignoring Complexity}
We begin by defining an ``ignore complexity" strategy for simplifying observation groups to a total-ordered, fully-specified form the compilation in \citet{ramirezGeffner09} can handle. This strategy removes fluent observations and option groups, reduces unordered groups to a single member, then simplifies zero- or no-member groups until just an ordered group is left. We choose this strategy over strategies that try different orderings/option group members because the other strategies would take exponentially longer to solve, requiring as many tries as there are orderings of unordered groups and combinations of option group choices. We sketch a proof that using complex observations will always be more accurate than or equally accurate to ignoring them. (Accuracy is measured by number of goals indicated: fewer false positives is more accurate). 
\begin{theorem}\label{no_worse}
Given $T_{cpx}=\langle P, \G, \Theta \rangle$ and $T_{ign}=\langle P, \G, \Theta_{ign} \rangle$, where $\Theta_{ign}$ removes some number of observations from $\Theta$ without altering ordering constraints, \mbox{$|\G^*_{cpx}| \leq |\G^*_{ign}|$}, where $\G^*_{cpx}$ is the solution set to $T_{cpx}$ and $\G^*_{ign}$ is the solution set to $T_{ign}$.
\end{theorem}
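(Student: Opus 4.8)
The plan is to prove the stronger set containment $\G^*_{cpx} \subseteq \G^*_{ign}$, from which $|\G^*_{cpx}| \leq |\G^*_{ign}|$ is immediate. Since $T_{cpx}$ and $T_{ign}$ share the same domain $P$ and goal set $\G$, the set of cost‑optimal plans of $P[G]$ is the same object in both problems; the only difference between membership in $\G^*_{cpx}$ and membership in $\G^*_{ign}$ is whether that optimal plan is required to satisfy $\Theta$ or $\Theta_{ign}$ (cf.\ Definition~\ref{solution_def}). Hence everything reduces to a single \emph{weakening lemma}: if a plan $\pi$ with initial state $I$ satisfies $\Theta$ (through some segment), then $\pi$ also satisfies $\Theta_{ign}$. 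Given the lemma, a cost‑optimal plan of $P[G]$ witnessing $G\in\G^*_{cpx}$ is itself a cost‑optimal plan of $P[G]$ witnessing $G\in\G^*_{ign}$, so $G\in\G^*_{ign}$, and we are done.

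To prove the weakening lemma I would run a structural induction on the observation‑group tree of $\Theta$, with one case per construct, mirroring the case split already used for Theorem~\ref{satisfies_theta}. Base cases: an action or fluent observation that survives into $\Theta_{ign}$ is satisfied by exactly the same segments as before; an observation that is deleted collapses to an empty ordered group, which is satisfied by every segment because its satisfaction condition quantifies over an empty set of members. Inductive cases: for an unordered group $\Theta_{\wedge}$, satisfaction of a segment means satisfaction of \emph{all} members, so deleting members (or reducing to a single one) only weakens the requirement, and the same segment still satisfies $\Theta_{\wedge}^{ign}$ by applying the induction hypothesis to each surviving member; for an ordered group $\Theta_{<}=[\theta_1,\dots,\theta_n]$ satisfied via a monotone $f$, I reuse $f$ — each surviving $\theta_i'$ is satisfied by the unchanged sub‑segment $\pi_{f(i)}^{f(i+1)-1}$ via the induction hypothesis, emptied members are trivially satisfied by their sub‑segments, so the ordering witnessed by $f$ carries over; an option group removed wholesale again collapses to a vacuously satisfied empty ordered group. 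If one wants $\Theta_{ign}$ in the fully simplified form the \citeauthor{ramirezGeffner09} compilation ingests (empty groups deleted, singleton groups flattened), I would add a short auxiliary claim that these simplification steps preserve the set of satisfying plans; in the ordered case this needs the easy monotonicity sublemma ``if $\pi_j^k$ satisfies $\theta$ and $j'\le j\le k\le k'$ then $\pi_{j'}^{k'}$ satisfies $\theta$'', itself a routine structural induction, used to merge an emptied member's sub‑segment into a neighbour's.

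The main obstacle is not the induction but delimiting which deletions are admissible, because the statement is genuinely \emph{false} for arbitrary deletions: removing one alternative from an option group $|o_a,\dots,o_b|$ \emph{strengthens} the constraint (a plan that satisfied the group only through the deleted alternative now fails), which could make $\G^*_{ign}$ strictly smaller than $\G^*_{cpx}$. The theorem must therefore be read — consistently with the ignore‑complexity strategy — as allowing option groups to be dropped only in their entirety, and I would state this restriction explicitly before the induction. A secondary, minor technical point is empty groups produced by deletion: an empty \emph{option} group fails the literal Definition (``at least one member'' is vacuously false), so such a group must be removed from its parent rather than kept as a placeholder, whereas empty ordered and unordered groups are harmless; fixing these conventions up front makes every case of the induction go through, yielding the weakening lemma, hence $\G^*_{cpx}\subseteq\G^*_{ign}$ and the bound $|\G^*_{cpx}|\le|\G^*_{ign}|$.
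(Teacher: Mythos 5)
Your proof is correct and takes a genuinely different route from the paper's. The paper gives only a proof sketch, and it argues by contradiction \emph{at the level of the compilation}: it assumes $|\G^*_{cpx}| > |\G^*_{ign}|$, picks $G \in \G^*_{cpx}\setminus\G^*_{ign}$, and reasons that whatever explanation action inflated $c^*(P'[G'])$ above $c^*(P[G])$ for $T_{ign}$ must also inflate it for $T_{cpx}$ (since $\Theta_{ign}$'s observations are a subset of $\Theta$'s), contradicting $G \in \G^*_{cpx}$. You instead work directly with Definition~\ref{solution_def} at the semantic level: you prove the containment $\G^*_{cpx} \subseteq \G^*_{ign}$ via a weakening lemma (``satisfies $\Theta$ implies satisfies $\Theta_{ign}$'') established by structural induction on the group tree, so that the very plan witnessing $G \in \G^*_{cpx}$ witnesses $G \in \G^*_{ign}$. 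Your version buys two things the paper's sketch does not: it avoids any appeal to the correctness of the compilation (the theorem is really a statement about the recognition problems, not about $P'[G']$), and it makes explicit exactly where deletions are admissible. In particular, your observation that the theorem as literally stated is false if ``removing observations'' may delete individual members of an option group --- since that \emph{strengthens} the constraint --- is a genuine catch; the paper's sketch silently assumes the subset relation on observations translates to a weaker satisfaction condition, which holds only when option groups are dropped wholesale, as the ignore-complexity strategy indeed does. The paper's sketch is shorter and ties directly into the experimental pipeline (costs of compiled problems), but yours is the more rigorous and more self-contained argument.
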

\begin{proof_sketch}
    Assume $|\G^*_{cpx}| > |\G^*_{ign}|$. Then there exists some $G \in \G$ such that $G \in \G^*_{cpx}$ but $G \not\in \G^*_{ign}$. This means an explanation action for some observation in $\Theta_{ign}$ created a larger cost for the optimal plan for $P'[G']$ compiled for $T_{ign}$, making $c^*(P'[G']) > c^*(P[G])$ and eliminating $G$ from $\G^*_{ign}$. Because the observations in $\Theta_{ign}$ are a subset of those in $\Theta_{cpx}$, that explanation action will also incur a cost for $P'[G']$ compiled for $T_{cpx}$, eliminating $G$ from $\G^*_{cpx}$. This contradicts the premise, so $|\G^*_{cpx}| \leq |\G^*_{ign}|$.
\end{proof_sketch}

%%%%%%%%%%%%%%%%%%%%%%%%%%%%%%%%%%%%%%%%%%%%%%%%%%%%%%%%%%%%%%%%%
\section{Experimental Evaluation}
We evaluate the proposed formulation against the ``ignore complexity'' strategy for accomodating complex obervations using only the formulation in \citet{ramirezGeffner09}. We use the same domains and plan recognition problems from that work but generate new \textit{complex} observations according to two parameters. The metric we're concerned with is the number of incorrect goals in the optimal goal set $\G^*$. By this metric we often perform better, and never perform worse. In some domains the ``ignore complexity" strategy often found no incorrect goals, leaving our formulation no room for improvement. We report how often this occurred, and focus on cases where we could improve. In general, our method is slower, taking longer to generate plans.

%================================================================
\subsection{Method}
%----------------------------------------------------------------

\textbf{Hypotheses}\ \ We hypothesize that the size of our optimal goal set will often be smaller than the size of the optimal goal set computed using simplified observations. We also measure the time it takes to compute the optimal goal set. This will be domain dependent, but we expect to see a general trend favoring one method or the other.

%----------------------------------------------------------------
\textbf{Apparatus}\ \ 
We developed our software\footnote{Available at 
\url{https://github.com/qed-lab/Complex-Observation-Compiler}
} by expanding the original plan recognition as planning code developed by \citet{ramirezGeffner09}.  Our software ran atop Centos 7.2 Linux with the 3.10 kernel, deployed on hardware equipped with an $3.60$GHz Intel Core i7-4790 Processor, $32$GB DDR3 $1600$MHz overclocked RAM, and $240$GB Intel 540 Solid State Drive. Optimal plans were generated using A* search with admissible h-max heuristic, but pruning paths when the estimated cost-to-goal reached a given threshold. When computing plans in compiled $P'[G']$, we set this threshold to the optimal cost of $P[G]$, which was pre-computed and not counted towards measures of time. This reduced the time it took to eliminate incorrect goals from being included in $\G^*$. For each $P'[G']$, we allotted ten times the time taken to compute $P[G]$, with a minimum of 20 seconds. (In the logistics domain, this was sometimes still not enough time.)

%----------------------------------------------------------------
\textbf{Procedure} \ \ 
For every plan recognition problem in \citet{ramirezGeffner09}, consisting of a domain $P = \langle F,I,A \rangle$ and possible goals $\G$ of which $G_{true}$ is correct, we found an optimal plan for $P[G_{true}]$, computed the trace of that plan, and generated complex observations from both the plan (\textbf{A}) and trace (\textbf{A+F}). For each plan/trace, we generated 3 observation sets per setting, varying setting by amount of observations of unknown order (\textbf{U\%}) and amount of `debound' observations missing a parameter (\textbf{D\%}). To generate observations, we randomly removed half the optimal plan (or trace), and removed 90\% of fluents from states in each trace. With the remaining plan (or trace) we placed \textbf{U\%} of observations into unordered groups of size $\sim\!\!3$. (So, \textbf{U\%} = 100\% means every observation was placed in a small unordered group, not that all observations were completely unordered.) Lastly, we `debound' \textbf{D\%} of action observations by removing one parameter and replacing the observation with an option group of all matching observations. (I.e. (eat H)$\rightarrow$(eat ?)$\rightarrow$$\Theta_{\oplus}=|$(eat A),...,(eat Z))$|$) \textbf{D\%} was only applied to action observations with at least one parameter. Altogether, this forms the plan recognition problem $\langle P, \G, \Theta \rangle$.

We varied \textbf{U} and \textbf{D} over five settings: 
\begin{itemize}[noitemsep,topsep=0pt,parsep=0pt,partopsep=0pt]
    \item No Obscuration: \textbf{U=0\% D=0\%} %(this work's compilation equals \cite{ramirezGeffner09} for the `\textbf{A}' mode)
    \item Vary Debind: \textbf{U=0\% D=25\%}
    \item Vary Unorderedness: \textbf{U=25\% D=0\%}
    \item Vary Unorderedness: \textbf{U=50\% D=0\%}
    \item Vary Both: \textbf{U=50\% D=25\%}
\end{itemize}

For each problem $\langle P, \G, \Theta \rangle$, and each $G \in \G$, we compiled two planning domains: $P'[G']$ (this work's compilation) and $P'_{ign}[G'_{ign}]$ (Using the ``ignore complexity" strategy and compiling as in \cite{ramirezGeffner09}). We compare the optimal costs of both to the optimal cost for $P[G]$ (precomputed). If $c^*(P'[G']) = c^*(P[G])$, we place $G$ in $\G^*_{cpx}$. If  $c^*(P'_{ign}[G'_{ign}])= c^*(P[G])$, we place $G$ in $\G^*_{ign}$.

%================================================================
\subsection{Analysis}

We conducted tests over four domains: Block-Words, Ipc-Grid, Grid-Navigation, and Logistics. For the latter three, most problems were perfectly solved with the ``ignore complexity" strategy, leaving no room for improvement. We removed instances where the ``ignore complexity'' strategy resulted in an empty observation set. (This occurred \textbf{38} times, mostly in the [\textbf{A+F} \textbf{U:50\%}\textbf{D:25\%}] setting.) Table \ref{tab:Data} reports the number of perfectly solved problems (Opt), problems not improved upon (Un), and problems improved upon (Imp), per setting, per domain. Table \ref{tab:Data} also reports the average number of observations per method when improvable ($|\Theta_{ign/cpx}|$ Opt) and when not improvable ($|\Theta_{ign/cpx}|$ Un/Imp), the average size of the solution set when improvable ($|\G^*_{ign/cpx}|$), and the average time to compute (whether or not improvable) (time$_{ign/cpx}$). Error rates indicate a 95\% confidence interval. When computing the observation set size, we considered an option group to have size 1.

We conducted an independent t-test, not assuming same variance, comparing the sizes of solution sets when improvable. It found a statistically significant difference between $|\G^*_{ign}|$($\mu$=3.91, $\sigma$=2.99) and $|\G^*_{cpx}|$($\mu$=2.51, $\sigma$=1.73) ($t$($df$=2372.97)=15.620, $p<$0.01, $\mu_{ign}$--$\mu_{cpx}$=1.40) T-tests also found statistically significant differences ($p<$0.01) for each domain, with Block-Words having the largest t-value ($t$($df$=1703.41)=14.15) and difference ($\mu_{ign}$--$\mu_{cpx}$=1.64). 

Notice that the [\textbf{A} \textbf{ U:0\% D:0\%}] setting produces identical solution set sizes. This is because without complex observations, our compilation is equivalent to that in \citet{ramirezGeffner09}. This changes for the \textbf{A+F} mode, where the ``ignore complexity'' strategy removes fluent observations.

Figures \ref{fig:G_Comparison} and \ref{fig:TimeComparison} show the results from Block-Words in more detail. Notches represent a 95\% confidence interval around the median value, and dashed lines represent mean. These are divided into action observations only (\textbf{A}) and mixed action/fluent observations (\textbf{A+F}). Figure \ref{fig:G_Comparison} compares the size of $\G^*_{cpx}$ and $\G^*_{ign}$ for different settings. These only consider instances where $|\G^*_{ign}| > 1$, leaving room for improvement. We report the number of these instances as n. Figure \ref{fig:TimeComparison} looks at the difference in time to compute $\G^*_{ign}$ vs the time to compute $\G^*_{cpx}$. This is divided by settings, and result of computation (if $\G^*_{ign}$ already optimal, if $\G^*_{ign}$ not optimal, but $\G^*_{cpx}$ did not improve, and if $\G^*_{cpx}$ did improve). The n for each category is reported. Note that values are always negative, meaning our method was always slower for the Block-Words domain.

\begin{table*}[tb]
    \centering
    \caption{Empirical Evaluation Results Per Domain and Setting}
    \resizebox{\textwidth}{!}{%
        \begin{tabular}{c|c|c|c||c|c|c|c|c|c|c|c|c|c|}
            && \textbf{U\%}
            & \textbf{D\%}
            & Opt
            & Imp 
            & $|\Theta_{ign}|$ Opt 
            & $|\Theta_{ign}|$ Imp 
            & $|\Theta_{cpx}|$ Opt
            & $|\Theta_{cpx}|$ Imp 
            & $|\G^*_{ign}|$ Imp
            & $|\G^*_{cpx}|$ Imp
            & time$_{ign}$ All 
            & time$_{cpx}$ All \\ \hline 
            \parbox[t]{2mm}{\multirow{20}{*}{\rotatebox[origin=c]{90}{\textbf{A}: Action Observations Only}}}& 
            \parbox[t]{2mm}{\multirow{5}{*}{\rotatebox[origin=c]{90}{Block-Words}}} 
              & 0\%  & 0\%  & 97 & 86  & 4.45 $\pm$ 0.24 & 4.08 $\pm$ 0.27 & 4.45 $\pm$ 0.24 & 4.08 $\pm$ 0.27 & \textbf{3.03 $\pm$ 0.33} & 3.03 $\pm$ 0.33 & 59.98 $\pm$ 3.25 & 77.11 $\pm$ 3.82  \\ 
            & & 0\%  & 25\% & 57 & 126 & 3.18 $\pm$ 0.19 & 2.75 $\pm$ 0.16 & 4.67 $\pm$ 0.29 & 4.10 $\pm$ 0.22 & 4.27 $\pm$ 0.48 & 3.17 $\pm$ 0.36 & 46.57 $\pm$ 3.08 & 93.38 $\pm$ 5.00  \\ 
            & & 25\% & 0\%  & 97 & 86  & 3.98 $\pm$ 0.15 & 3.78 $\pm$ 0.21 & 4.42 $\pm$ 0.23 & 4.12 $\pm$ 0.29 & 3.27 $\pm$ 0.33 & 3.02 $\pm$ 0.32 & 54.83 $\pm$ 2.97 & 83.55 $\pm$ 4.60  \\ 
            & & 50\% & 0\%  & 65 & 118 & 2.97 $\pm$ 0.14 & 2.81 $\pm$ 0.12 & 4.45 $\pm$ 0.28 & 4.19 $\pm$ 0.23 & 3.81 $\pm$ 0.40 & \textbf{2.86 $\pm$ 0.33} & 47.92 $\pm$ 2.98 & 99.81 $\pm$ 5.98  \\ 
            & & 50\% & 25\% & 46 & 137 & 2.61 $\pm$ 0.23 & 2.14 $\pm$ 0.13 & 4.59 $\pm$ 0.33 & 4.18 $\pm$ 0.21 & 5.01 $\pm$ 0.60 & 3.42 $\pm$ 0.45 & 41.12 $\pm$ 3.04 & 115.78 $\pm$ 6.75  \\  \cline{2-14}
            &\parbox[t]{2mm}{\multirow{5}{*}{\rotatebox[origin=c]{90}{Ipc-Grid}}} 
             & 0\%  & 0\%  & 76 & 14 & 6.92 $\pm$ 0.53 & 7.21 $\pm$ 1.24 & 6.92 $\pm$ 0.53 & 7.21 $\pm$ 1.24 & 2.00 $\pm$ 0.00 & 2.00 $\pm$ 0.00 & 4.20 $\pm$ 0.84 & 8.53 $\pm$ 1.75  \\ 
            && 0\%  & 25\% & 76 & 14 & 4.82 $\pm$ 0.41 & 5.14 $\pm$ 0.87 & 6.89 $\pm$ 0.53 & 7.36 $\pm$ 1.19 & 2.29 $\pm$ 0.42 & \textbf{1.93 $\pm$ 0.42} & 3.62 $\pm$ 0.74 & 8.01 $\pm$ 1.62  \\ 
            && 25\% & 0\%  & 74 & 16 & 5.78 $\pm$ 0.41 & 6.25 $\pm$ 0.79 & 6.84 $\pm$ 0.54 & 7.56 $\pm$ 1.03 & \textbf{2.12 $\pm$ 0.27} & 2.06 $\pm$ 0.31 & 3.63 $\pm$ 0.72 & 8.41 $\pm$ 1.74  \\ 
            && 50\% & 0\%  & 73 & 17 & 4.38 $\pm$ 0.35 & 4.65 $\pm$ 0.85 & 6.89 $\pm$ 0.54 & 7.29 $\pm$ 1.13 & 2.41 $\pm$ 0.66 & 1.94 $\pm$ 0.34 & 3.22 $\pm$ 0.65 & 10.88 $\pm$ 2.58  \\ 
            && 50\% & 25\% & 65 & 25 & 3.42 $\pm$ 0.34 & 3.16 $\pm$ 0.60 & 6.91 $\pm$ 0.59 & 7.12 $\pm$ 0.85 & 2.40 $\pm$ 0.55 & 1.64 $\pm$ 0.29 & 2.79 $\pm$ 0.54 & 11.70 $\pm$ 2.78  \\ 
            \cline{2-14}
            &\parbox[t]{2mm}{\multirow{5}{*}{\rotatebox[origin=c]{90}{Navigation}}} 
             & 0\%  & 0\%  & 58 & 5  & 9.31 $\pm$ 1.42 & 5.40 $\pm$ 0.68 & 9.31 $\pm$ 1.42 & 5.40 $\pm$ 0.68 & 3.60 $\pm$ 2.72 & 3.60 $\pm$ 2.72 & 0.20 $\pm$ 0.04 & 0.21 $\pm$ 0.02  \\ 
            && 0\%  & 25\% & 52 & 11 & 6.17 $\pm$ 1.13 & 6.55 $\pm$ 2.56 & 8.90 $\pm$ 1.50 & 9.45 $\pm$ 3.36 & 2.73 $\pm$ 0.85 & 2.09 $\pm$ 0.63 & 0.21 $\pm$ 0.07 & 0.19 $\pm$ 0.03  \\ 
            && 25\% & 0\%  & 56 & 7  & 7.36 $\pm$ 1.11 & 7.57 $\pm$ 5.36 & 8.96 $\pm$ 1.37 & 9.29 $\pm$ 6.51 & 2.71 $\pm$ 1.38 & 2.57 $\pm$ 1.50 & 0.19 $\pm$ 0.05 & 0.17 $\pm$ 0.02  \\ 
            && 50\% & 0\%  & 56 & 7  & 5.80 $\pm$ 0.91 & 6.29 $\pm$ 4.33 & 8.91 $\pm$ 1.37 & 9.71 $\pm$ 6.31 & \textbf{2.43 $\pm$ 0.73} & 2.00 $\pm$ 0.53 & 0.19 $\pm$ 0.03 & 0.19 $\pm$ 0.02  \\ 
            && 50\% & 25\% & 52 & 11 & 4.58 $\pm$ 0.83 & 4.64 $\pm$ 2.00 & 8.94 $\pm$ 1.44 & 9.27 $\pm$ 4.08 & 3.00 $\pm$ 1.08 & \textbf{1.91 $\pm$ 0.97} & 0.19 $\pm$ 0.04 & 0.20 $\pm$ 0.03  \\ 
            \cline{2-14}
            &\parbox[t]{2mm}{\multirow{5}{*}{\rotatebox[origin=c]{90}{Logistics}}} 
             & 0\%  & 0\%  & 54 & 6  & 9.83 $\pm$ 0.10 & 10.00 $\pm$ 0.00 & 9.83 $\pm$ 0.10 & 10.00 $\pm$ 0.00 & \textbf{2.00 $\pm$ 0.00} & 2.00 $\pm$ 0.00 & 892.68 $\pm$ 22.73 & 903.62 $\pm$ 22.72  \\ 
            && 0\%  & 25\% & 52 & 8  & 6.83 $\pm$ 0.11 & 7.00 $\pm$ 0.00 & 9.83 $\pm$ 0.11 & 10.00 $\pm$ 0.00 & 2.12 $\pm$ 0.30 & 2.00 $\pm$ 0.45 & 902.31 $\pm$ 22.47 & 900.95 $\pm$ 22.82  \\ 
            && 25\% & 0\%  & 45 & 15 & 7.84 $\pm$ 0.11 & 7.87 $\pm$ 0.19 & 9.84 $\pm$ 0.11 & 9.87 $\pm$ 0.19 & 2.13 $\pm$ 0.19 & 1.73 $\pm$ 0.33 & 884.55 $\pm$ 24.70 & 903.10 $\pm$ 24.98  \\ 
            && 50\% & 0\%  & 49 & 11 & 6.86 $\pm$ 0.10 & 6.82 $\pm$ 0.27 & 9.86 $\pm$ 0.10 & 9.82 $\pm$ 0.27 & 2.18 $\pm$ 0.27 & 1.55 $\pm$ 0.35 & 893.55 $\pm$ 24.43 & 917.98 $\pm$ 22.67  \\ 
            && 50\% & 25\% & 47 & 13 & 5.28 $\pm$ 0.24 & 5.08 $\pm$ 0.39 & 9.83 $\pm$ 0.11 & 9.92 $\pm$ 0.17 & 2.46 $\pm$ 0.58 & \textbf{1.31 $\pm$ 0.29} & 888.69 $\pm$ 27.90 & 923.47 $\pm$ 20.64  \\ 
            \hline\hline
            \parbox[t]{2mm}{\multirow{20}{*}{\rotatebox[origin=c]{90}{\textbf{A+F} : Action and Fluent Observations}}}& 
            \parbox[t]{2mm}{\multirow{5}{*}{\rotatebox[origin=c]{90}{Block-Words}}} 
             & 0\%  & 0\%  & 102 & 81  & 4.65 $\pm$ 0.25 & 4.51 $\pm$ 0.41 & 8.69 $\pm$ 0.43 & 8.40 $\pm$ 0.61 & \textbf{3.41 $\pm$ 0.53} & 2.57 $\pm$ 0.34 & 61.95 $\pm$ 3.71 & 112.73 $\pm$ 4.74  \\ 
            && 0\%  & 25\% & 44  & 132 & 3.16 $\pm$ 0.41 & 1.98 $\pm$ 0.17 & 9.00 $\pm$ 0.71 & 8.45 $\pm$ 0.42 & 6.36 $\pm$ 0.80 & 2.36 $\pm$ 0.30 & 38.17 $\pm$ 3.33 & 144.43 $\pm$ 6.44  \\ 
            && 25\% & 0\%  & 99  & 84  & 4.44 $\pm$ 0.22 & 4.06 $\pm$ 0.30 & 8.73 $\pm$ 0.44 & 8.36 $\pm$ 0.59 & 3.50 $\pm$ 0.46 & 2.77 $\pm$ 0.34 & 59.14 $\pm$ 3.44 & 140.76 $\pm$ 6.74  \\ 
            && 50\% & 0\%  & 89  & 94  & 4.10 $\pm$ 0.24 & 3.27 $\pm$ 0.26 & 9.15 $\pm$ 0.45 & 8.00 $\pm$ 0.54 & 3.70 $\pm$ 0.47 & 2.49 $\pm$ 0.29 & 53.20 $\pm$ 3.09 & 165.64 $\pm$ 7.57  \\ 
            && 50\% & 25\% & 41  & 124 & 2.51 $\pm$ 0.29 & 2.02 $\pm$ 0.17 & 8.63 $\pm$ 0.59 & 8.81 $\pm$ 0.46 & 6.30 $\pm$ 0.75 & \textbf{2.35 $\pm$ 0.33} & 36.13 $\pm$ 3.03 & 190.01 $\pm$ 8.16  \\ 
            \cline{2-14}
            &\parbox[t]{2mm}{\multirow{5}{*}{\rotatebox[origin=c]{90}{Ipc-Grid}}} 
             & 0\%  & 0\%  & 79 & 11 & 6.76 $\pm$ 0.57 & 7.18 $\pm$ 1.34 & 13.25 $\pm$ 1.01 & 14.73 $\pm$ 2.44 & \textbf{2.00 $\pm$ 0.00} & 2.00 $\pm$ 0.00 & 4.16 $\pm$ 0.83 & 1.23 $\pm$ 0.24  \\ 
            && 0\%  & 25\% & 62 & 27 & 3.71 $\pm$ 0.44 & 2.67 $\pm$ 0.56 & 13.53 $\pm$ 1.19 & 13.33 $\pm$ 1.53 & 3.26 $\pm$ 0.97 & \textbf{1.37 $\pm$ 0.19} & 2.76 $\pm$ 0.58 & 1.60 $\pm$ 0.42  \\ 
            && 25\% & 0\%  & 73 & 17 & 6.03 $\pm$ 0.52 & 6.82 $\pm$ 1.18 & 12.86 $\pm$ 1.01 & 15.88 $\pm$ 2.15 & \textbf{2.00 $\pm$ 0.00} & 1.76 $\pm$ 0.22 & 3.98 $\pm$ 0.80 & 1.81 $\pm$ 0.40  \\ 
            && 50\% & 0\%  & 71 & 19 & 5.52 $\pm$ 0.50 & 5.84 $\pm$ 1.21 & 13.13 $\pm$ 1.06 & 14.58 $\pm$ 2.00 & 2.26 $\pm$ 0.39 & 1.79 $\pm$ 0.20 & 3.76 $\pm$ 0.75 & 2.34 $\pm$ 0.65  \\ 
            && 50\% & 25\% & 51 & 31 & 3.18 $\pm$ 0.38 & 3.06 $\pm$ 0.54 & 13.61 $\pm$ 1.27 & 14.52 $\pm$ 1.30 & 3.13 $\pm$ 0.82 & 1.58 $\pm$ 0.18 & 3.06 $\pm$ 0.64 & 2.59 $\pm$ 0.61  \\
            \cline{2-14}
            &\parbox[t]{2mm}{\multirow{5}{*}{\rotatebox[origin=c]{90}{Navigation}}} 
             & 0\%  & 0\%  & 54 & 9  & 9.17 $\pm$ 1.49 & 8.33 $\pm$ 4.25 & 17.41 $\pm$ 2.79 & 17.56 $\pm$ 9.35 & 2.56 $\pm$ 1.02 & 2.44 $\pm$ 1.09 & 0.18 $\pm$ 0.02 & 0.23 $\pm$ 0.03  \\ 
            && 0\%  & 25\% & 48 & 14 & 4.56 $\pm$ 0.95 & 4.07 $\pm$ 1.61 & 16.73 $\pm$ 2.91 & 20.14 $\pm$ 6.89 & 3.07 $\pm$ 0.86 & 1.79 $\pm$ 0.79 & 0.18 $\pm$ 0.02 & 0.20 $\pm$ 0.04  \\ 
            && 25\% & 0\%  & 56 & 7  & 8.02 $\pm$ 1.31 & 9.29 $\pm$ 5.92 & 17.27 $\pm$ 2.71 & 18.71 $\pm$ 12.51 & 3.14 $\pm$ 1.35 & 2.57 $\pm$ 1.68 & 0.17 $\pm$ 0.01 & 0.20 $\pm$ 0.03  \\ 
            && 50\% & 0\%  & 57 & 6  & 7.60 $\pm$ 1.14 & 8.17 $\pm$ 6.28 & 17.16 $\pm$ 2.67 & 20.00 $\pm$ 15.12 & \textbf{2.50 $\pm$ 0.88} & 1.67 $\pm$ 0.54 & 0.17 $\pm$ 0.01 & 0.20 $\pm$ 0.03  \\ 
            && 50\% & 25\% & 39 & 21 & 3.95 $\pm$ 0.81 & 4.52 $\pm$ 1.12 & 16.21 $\pm$ 3.14 & 20.62 $\pm$ 5.36 & 2.86 $\pm$ 0.52 & \textbf{1.19 $\pm$ 0.18} & 0.16 $\pm$ 0.01 & 0.19 $\pm$ 0.01  \\
            \cline{2-14}
            &\parbox[t]{2mm}{\multirow{5}{*}{\rotatebox[origin=c]{90}{Logistics}}} 
            
             & 0\%  & 0\%  & 55 & 5  & 10.13 $\pm$ 0.43 & 19.25 $\pm$ 0.20 & 8.60 $\pm$ 0.68 & 19.80 $\pm$ 0.56 & \textbf{2.00 $\pm$ 0.00} & 1.80 $\pm$ 0.56 & 895.50 $\pm$ 22.44 & 913.19 $\pm$ 21.46  \\
            && 0\%  & 25\% & 35 & 25 & 5.71 $\pm$ 0.54 & 19.23 $\pm$ 0.24 & 4.68 $\pm$ 0.70 & 19.40 $\pm$ 0.32 & 2.60 $\pm$ 0.46 & 1.32 $\pm$ 0.26 & 862.35 $\pm$ 27.86 & 901.43 $\pm$ 26.05  \\ 
            && 25\% & 0\%  & 52 & 8  & 9.10 $\pm$ 0.41 & 19.25 $\pm$ 0.20 & 10.25 $\pm$ 1.72 & 19.62 $\pm$ 0.62 & \textbf{2.00 $\pm$ 0.00} & 1.75 $\pm$ 0.39 & 891.18 $\pm$ 22.76 & 910.72 $\pm$ 22.88  \\ 
            && 50\% & 0\%  & 47 & 13 & 7.94 $\pm$ 0.37 & 19.26 $\pm$ 0.22 & 8.00 $\pm$ 0.55 & 19.46 $\pm$ 0.31 & 2.08 $\pm$ 0.17 & 1.54 $\pm$ 0.31 & 890.45 $\pm$ 23.33 & 933.17 $\pm$ 23.85  \\ 
            && 50\% & 25\% & 37 & 23 & 4.89 $\pm$ 0.37 & 19.30 $\pm$ 0.23 & 4.26 $\pm$ 0.68 & 19.30 $\pm$ 0.33 & 2.48 $\pm$ 0.45 & \textbf{1.22 $\pm$ 0.29} & 866.39 $\pm$ 32.33 & 930.81 $\pm$ 22.57  \\ 
            \hline
            \end{tabular}%
    }%
    \caption{\textbf{U\%} is percent of observations placed in an unordered set. \textbf{D\%} is percent of `debound' observations. We distinguish between samples perfectly solved by the ignore strategy (Opt), and samples with room for improvement (Imp). $|\Theta_{ign/cpx}|$ Opt/Imp is the observation set size for the specified method and sample group. $|\G^*_{ign/cpx}|$ is the size of the solution set, per method, over the improvable (Imp) samples. time$_{ign/cpx}$ is the time to compute, per method, over all samples. 
    % All values with error rates are means with a 95\% confidence interval. 
    }
    \label{tab:Data}
\end{table*}

\begin{figure}
    \centering
    \includegraphics[width=\linewidth]{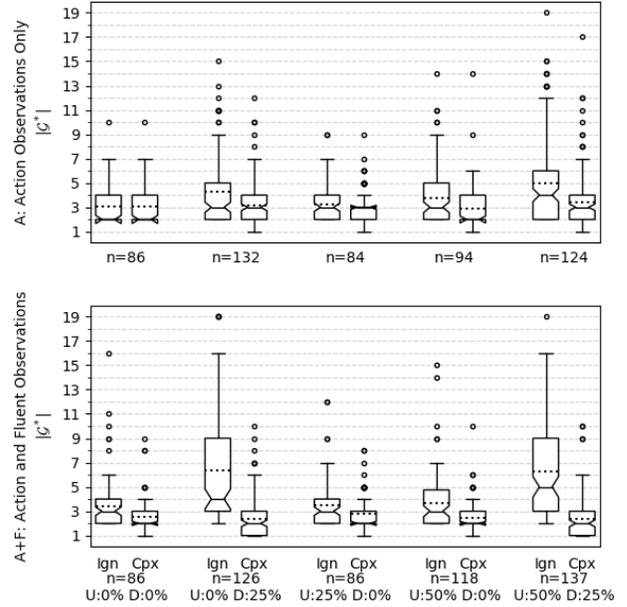}
    \caption{Comparison of solution set sizes $|G^*_{ign}|$ and $|G^*_{cpx}|$, from samples where improvement was possible. A solution set size of 1 is optimal.}
    \label{fig:G_Comparison}
\end{figure}
\begin{figure}
    \centering
    \includegraphics[width=\linewidth]{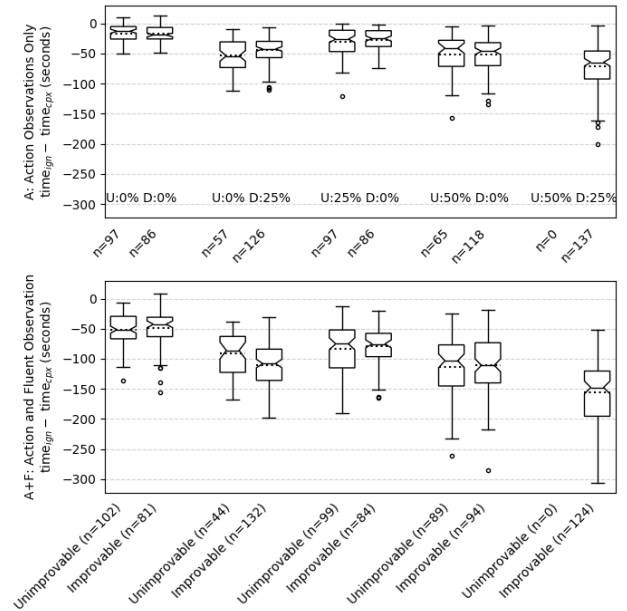}
    \caption{Difference in time to compute \citeauthor{ramirezGeffner09} method vs. time to compute this work's method.}
    \label{fig:TimeComparison}
\end{figure}

%================================================================
\subsection{Discussion}
Figure \ref{fig:G_Comparison} shows that complex observations can be a crucial factor in eliminating false hypotheses. Particularly for scenarios with multiple types of complexity, such as the (A+F U:50\% D:25\%) setting, ignoring complexity can cost three or four false positives. In no case were we less accurate, empirically confirming Theorem \ref{no_worse}.

This considered, our method is consistently slower across domains, regardless of improvement. We hypothesize that this is due to a larger search space. Utilizing more observations means including more actions in the planning domain, which might take longer to consider. This time is highly domain-dependent. For instance, Logistics takes hundreds of seconds while Ipc-Grid takes under a second.

For all domains except Block-Words, the number of instances where we could improve (\textit{i.e.} $|G^*_{ign}| \neq 1$) was too small to make significant conclusions. This brings up the concept of plan recognition difficulty. What makes Block-Words more difficult than the other domains? The other domains have, on average, larger observation sets to work with, derived from longer plans. Is it the number of observations available? Are the possible goals in its $\G$ more similar? If so, what makes them similar? Plan Recognition difficulty is not necessarily tied to planning difficulty. The Logistics domain took extraordinarily long compared to the Ipc-Grid and Navigation domains, yet all found the optimal solution set most of the time. 

In applications with plentiful information or a low ratio of complex to non-complex observations, ignoring complexity may be preferred for faster results with little loss of answer quality. However, areas with sparse information, higher ratios of complex to non-complex observations, or in domains known to be difficult, using complex information is vital, even if it takes longer to compute.

% 
%%%%%%%%%%%%%%%%%%%%%%%%%%%%%%%%%%%%%%%%%
% \section{Future Work}
The new definitions for complex observation types can be used for any plan recognition approach, and our compilation can be adapted for other planning-based approaches. In particular, we are interested in adapting this compilation for probabilistic plan recognition and multi-agent plan recognition.

This work was limited by time constraints for how comprehensive an evaluation to run. We selected representative settings, but wish to reevaluate with more coverage over more settings to pinpoint those settings where a domain becomes `easy', as measured by how often the optimal solution set is found. 
% 
% 
% 
%%%%%%%%%%%%%%%%%%%%%%%%%%%%%%%%%%%%%%%%%
\section{Conclusion}
For plan recognition to be used broadly, it needs to be capable of handling all types of information handed to it. From obstructed vision in robots to ambiguous word meanings in natural language, complex observations can come from many real-world scenarios, and this method lays the groundwork for leveraging them. We provide crisp definitions for partial-order optional observations of eaither fluents or actions and what it means to satisfy each, then prove that our compilation will produce satisfactory plans. While this work deals only with optimal solutions, the definitions provided can be extended to work with non-optimal probabilistic plan recognition. 

\bibliographystyle{aaai}
\bibliography{references}

\end{document}